%% file: main.tex
\documentclass{article}

\PassOptionsToPackage{numbers, compress}{natbib}

\usepackage[preprint]{neurips_2026}

\usepackage[utf8]{inputenc}
\usepackage[T1]{fontenc}
\usepackage{url}
\usepackage{booktabs}
\usepackage{amsfonts}
\usepackage{amsmath}
\usepackage{amssymb}
\usepackage{amsthm}
\usepackage{nicefrac}
\usepackage{microtype}
\usepackage{graphicx}
\usepackage{xcolor}
\usepackage{hyperref}   

\hypersetup{
    colorlinks=true,
    linkcolor=blue!80,
    citecolor=blue!80,
    urlcolor=blue!80
}

\newtheorem{proposition}{Proposition}
\newcommand{\KL}{\mathrm{KL}}
\newcommand{\Ren}{D^{\mathrm{R}}}
\newcommand{\Tsa}{D^{\mathrm{T}}}
\newcommand{\Norm}{\mathcal{N}}

\title{Generalized Graph Variational Autoencoders:\\Bounded Divergences Control Posterior Collapse}

\author{%
  Kleyton da Costa\thanks{Corresponding author. This work was developed during the author's M.Sc in Computer Science at Pontifical Catholic University of Rio de Janeiro, Brazil.} \\
  UCL \& Holistic AI \\
  London, UK \\
  \texttt{kleyton.costa.25@ucl.ac.uk} \\
  \And
  Bernardo Modenesi\\
  University of Utah\\
  Salt Lake City, US \\
  \texttt{bernardo.modenesi@utah.edu} \\
  \AND
  Ivan F.M. Menezes\\
  PUC-Rio\\
  Rio de Janeiro, Brazil \\
  \texttt{ivan@puc-rio.br} \\
  \And
  Hélio Lopes\\
  PUC-Rio\\
  Rio de Janeiro, Brazil \\
  \texttt{lopes@inf.puc-rio.br} \\
}

\begin{document}

\maketitle

\begin{center}
\vspace{-0.6em}
\small
\textbf{Code:} \href{https://github.com/kleyt0n/ggva}{https://github.com/kleyt0n/ggva}
\vspace{0.6em}
\end{center}

\begin{abstract}
The variational graph autoencoder (VGAE) regularizes its posterior toward the prior with the Kullback-Leibler divergence, a choice inherited from the variational autoencoder rather than argued for. We introduce the \emph{generalized graph variational autoencoder} (GGVA), which replaces that term with any member of the R\'enyi-Tsallis family of order $q$ while leaving every other part of the model untouched. Both members admit closed forms for diagonal Gaussians and both recover the KL exactly as $q \to 1$, so the VGAE is the $q=1$ arm of our own model rather than a separate baseline, and any measured difference is attributable to a single scalar. Our analysis identifies boundedness, not the order, as the operative property: for $q<1$ the Tsallis divergence is bounded above by $1/(1-q)$, independently of the latent width, whereas the KL and the R\'enyi divergence of the same order are unbounded. On ten graphs spanning three synthetic families, a social network, three citation networks, a
connectome, a power grid and a road network, $q$ moves the retained posterior information by up to $49\times$ relative to the VGAE, while the R\'enyi arm at
the \emph{same} order stays within $1.02$--$1.30\times$ of it on all six larger real graphs (isolating the bound as the cause). The retained information is usable: probing the frozen embedding for node class, a label absent from the objective, gives GGVA up to $+0.14$ macro-F1 over the VGAE on CiteSeer, with the R\'enyi control again
tracking the VGAE. We also report what the design was built to expose: none of this reaches held-out link-prediction accuracy on any of the six larger real graphs, and
boundedness delays posterior collapse rather than preventing it.
\end{abstract}

\section{Introduction}

The variational graph autoencoder \citep{kipf2016variational} embeds nodes by
maximizing an evidence lower bound whose second term penalizes the approximate
posterior for departing from a standard normal prior. That penalty is the
Kullback-Leibler divergence. The choice is almost never examined: it is
inherited from the variational autoencoder \citep{kingma2014auto}, where it is
convenient because it has a closed form for Gaussians, and it is retained in the
graph setting for the same reason.

Convenience is a weak argument, because an entire one-parameter family of
divergences has closed forms for Gaussians. The R\'enyi
\citep{renyi1961measures} and Tsallis \citep{tsallis1988possible} divergences of
order $q$ both contract to the KL as $q \to 1$, and substituting one for the
other changes the geometry of the penalty without changing the model, the
encoder, the decoder, or the reconstruction term. The same order $q$ has been
used as a tunable inductive bias elsewhere in deep learning, from sparse
attention to loss design \citep{dacosta2026perspectives}.

There is a specific reason to expect this to matter. The KL is
\emph{unbounded}: weighted heavily enough it can always reduce the objective
further by pulling the posterior onto the prior, which is the mechanism behind
posterior collapse \citep{bowman2016generating,sonderby2016ladder,alemi2018fixing}.
The Tsallis divergence of order $q<1$ is bounded above by $1/(1-q)$, and a
saturated penalty has no gradient left with which to pull.

Graphs make the scale mismatch especially visible: reconstruction is averaged
over edges while the latent penalty is averaged over nodes, so the same nominal
weight exerts different pressure as graph size and density change. Sweeping that
pressure over graphs of very different sizes provides a direct stress test of
whether a divergence merely rescales the KL or changes the response itself.

We contribute (i) the GGVA (\S\ref{sec:method}), which parameterizes the latent penalty by $q$ with closed forms for diagonal Gaussians and an exact reduction to
the VGAE at $q=1$, making the comparison controlled by construction; (ii) the $1/(1-q)$ bound
(Proposition~\ref{prop:bound}) together with an experimental design that
\emph{isolates} it, since the R\'enyi and Tsallis members share the order, the
escort integral and every Gaussian term and differ only in whether the result is
bounded (\S\ref{sec:protocol}); and (iii) two negative results, reported with the controls that
establish them. The effect does not reach link-prediction accuracy on any of the six larger real
graph, and boundedness delays collapse rather than preventing it.

\section{Related work}
\label{sec:related}

Posterior collapse and over-pruning are established failure modes of variational
autoencoders \citep{bowman2016generating,burda2016importance,alemi2018fixing}.
Epitomic VGAE addresses inactive units in this same graph setting by introducing
competing groups of latent variables \citep{khan2021epitomic}. Our question is
complementary: we leave the architecture, prior and training protocol fixed and
change only the scalar divergence used for the nodewise posterior penalty.

The closest precedent for that mechanism is q-VAE, which derives a
Tsallis-deformed lower bound for disentangled representation learning
\citep{kobayashi2020qvae}. GGVA instead keeps the reconstruction term and
Gaussian model fixed, swaps only the penalty, and uses a same-order R\'enyi arm
to isolate boundedness. R\'enyi variational inference changes the bound itself
\citep{li2016renyi}; InfoVAE and WAE match the aggregated posterior, including
with kernel MMD penalties \citep{zhao2019infovae,tolstikhin2018wasserstein}; and
$t^3$VAE couples a power divergence to heavy-tailed priors, encoders and decoders
\citep{kim2024t3vae}. These methods motivate alternatives to the KL, but none is
the controlled one-scalar comparison studied here.

\section{The generalized graph variational autoencoder}
\label{sec:method}

\subsection{Background}

Let $G=(V,E)$ with $N=|V|$, adjacency $A$ and node features $X$. An encoder maps
$(X,A)$ to a factorized Gaussian posterior
$q_\phi(Z \mid X, A) = \prod_{i=1}^{N} \Norm\!\left(z_i \mid \mu_i,
\mathrm{diag}(\sigma_i^2)\right)$
with prior $p(Z)=\prod_i \Norm(z_i \mid 0, I_d)$, and an inner-product decoder
scores a pair as $p(A_{ij}=1 \mid Z) = \mathrm{sigmoid}(z_i^\top z_j)$. Training
minimizes
\begin{equation}
  \mathcal{L}
  \;=\;
  \underbrace{\mathcal{L}_{\text{rec}}}_{\text{per edge}}
  \;+\;
  \beta\,\lambda\,
  \underbrace{\frac{1}{N}\sum_{i=1}^{N} D\!\left(q_\phi(z_i \mid X,A)\,\|\,p(z_i)\right)}_{\text{per node}},
  \label{eq:objective}
\end{equation}
where $\mathcal{L}_{\text{rec}}$ is a binary cross-entropy over observed edges and
sampled non-edges, $\beta$ is the $\beta$-VAE weight \citep{higgins2017beta}, and
$\lambda$ is a commensurability factor. The two terms are not on the same scale:
the divergence is averaged over \emph{nodes} and summed over latent dimensions,
while the reconstruction term is averaged over \emph{edges}. The reference
implementation sets $\lambda = 1/N$; we write $\lambda = c/N$ and treat $c$ as the
knob that controls how hard the penalty presses. Setting $D=\KL$ recovers the
VGAE exactly. Only the product $\beta\lambda$ is identifiable in
\eqref{eq:objective}; we fix $\beta=1$ throughout, so $c$ is the sole weight
parameter.

\subsection{A one-parameter family of latent penalties}

We take the latent penalty from the R\'enyi--Tsallis family. Both members are
transforms of one quantity, the \emph{escort integral}
$I_q(p\|r) = \int p(z)^{q} r(z)^{1-q}\,\mathrm{d}z$ of order $q>0$ --- the
logarithmic and the linear transform respectively:
\begin{equation}
  \Ren_q \;=\; \frac{\log I_q}{q-1},
  \qquad
  \Tsa_q \;=\; \frac{I_q - 1}{q-1}
           \;=\; \frac{e^{(q-1)\Ren_q}-1}{q-1},
  \qquad
  \lim_{q \to 1}\Ren_q \;=\; \lim_{q \to 1}\Tsa_q \;=\; \KL .
  \label{eq:family}
\end{equation}
For a one-dimensional
Gaussian posterior $\Norm(m,\sigma^2)$ against a standard normal prior, writing
$s = (1-q)\sigma^2 + q$, the R\'enyi divergence is available in closed form
\citep{vanerven2014renyi},
\begin{equation}
  \Ren_q \;=\; \frac{q}{2}\cdot\frac{m^{2}}{s}
              \;-\; \frac{\log s}{2(q-1)}
              \;-\; \log \sigma ,
  \qquad s > 0 .
  \label{eq:renyi-gauss}
\end{equation}
It is additive across latent dimensions, so the $d$-dimensional value is the sum
of \eqref{eq:renyi-gauss}. The Tsallis divergence of the joint posterior is
\emph{not} additive: exponentiating the summed R\'enyi divergence gives
\begin{equation}
  \Tsa_q\!\left(q_\phi(z_i) \,\|\, p(z_i)\right)
  \;=\;
  \frac{\exp\!\Big((q-1)\textstyle\sum_{k=1}^{d}{\Ren_q}^{(k)}\Big) - 1}{q-1},
  \label{eq:tsallis-joint}
\end{equation}
which is pseudo-additive,
$\Tsa_q(P_1 \otimes P_2) = \Tsa_q(P_1) + \Tsa_q(P_2)
+ (q-1)\Tsa_q(P_1)\Tsa_q(P_2)$.
This non-additivity is what makes the bound below independent of $d$. We
implement \eqref{eq:tsallis-joint} with $\mathrm{expm1}$ and $\mathrm{log1p}$
throughout; the naive form loses all significant digits near $q=1$, which is
precisely the regime of interest (Appendix~\ref{app:numerics}).

\subsection{Boundedness}

\begin{proposition}[Bounded latent penalty]
\label{prop:bound}
For $0 < q < 1$ and any posterior, $0 \le \Tsa_q \le \dfrac{1}{1-q}$, with the
bound independent of the latent dimension $d$. For $q \ge 1$, $\Tsa_q$ is
unbounded, as is $\Ren_q$ for every $q>0$. For the Gaussian pair in
\eqref{eq:renyi-gauss} and $q>1$, both are infinite whenever
$\sigma^2 \ge q/(q-1)$.
\end{proposition}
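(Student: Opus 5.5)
The plan is to work entirely through the escort integral $I_q=\int p^q r^{1-q}\,\mathrm{d}z$ and the two transforms in \eqref{eq:family}, splitting on $q<1$ versus $q\ge1$.

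\emph{Case $0<q<1$.} Hölder's inequality with exponents $1/q$ and $1/(1-q)$ gives $I_q\le(\int p)^q(\int r)^{1-q}=1$. Since $I_q\ge0$ trivially, $\Tsa_q=(1-I_q)/(1-q)\in[0,1/(1-q)]$. The lower bound $\Tsa_q\ge0$ is exactly $I_q\le1$, and the upper bound is exactly $I_q\ge0$.

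Neither step mentions $d$, because the argument applies to the joint density on $\mathbb{R}^d$ directly. For the Gaussian case, \eqref{eq:tsallis-joint} gives the same conclusion: $\sum_k\Ren_q^{(k)}\ge0$, so the exponential lies in $(0,1]$. This is also where non-additivity enters. Summing $d$ per-coordinate Tsallis terms would only be bounded by $d/(1-q)$, whereas the joint form saturates at $1/(1-q)$.

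\emph{Unboundedness.} I will exhibit explicit one-dimensional Gaussian families via \eqref{eq:renyi-gauss}. With $\sigma=1$ we have $s=1$, so $\Ren_q=qm^2/2\to\infty$ as $|m|\to\infty$ for every $q>0$. Hence $\Ren_q$ is unbounded for all $q>0$.

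For $q>1$, $\Tsa_q=(e^{(q-1)\Ren_q}-1)/(q-1)$ is increasing in $\Ren_q$ and also diverges. For $q=1$, $\Tsa_1=\KL=m^2/2$ diverges too.

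\emph{Infinite values for $q>1$.} Here $s=q-(q-1)\sigma^2$, so $s\le0$ exactly when $\sigma^2\ge q/(q-1)$. The closed form \eqref{eq:renyi-gauss} is then not valid, and I will go back to the integral. The integrand $p^q r^{1-q}$ has exponent
\[
-\frac{q(z-m)^2}{2\sigma^2}+\frac{(q-1)z^2}{2},
\]
whose $z^2$ coefficient is $\tfrac12\bigl((q-1)-q/\sigma^2\bigr)=-s/(2\sigma^2)\ge0$. When $s<0$ the integrand grows without bound, so $I_q=\infty$.

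The boundary case $s=0$ is the main obstacle. The quadratic term vanishes and a linear term $qmz/\sigma^2$ remains. For $m\neq0$ this grows exponentially in one direction; for $m=0$ the integrand is a positive constant. In both cases $I_q=\infty$, so $\Ren_q=\log I_q/(q-1)=\infty$ and $\Tsa_q=\infty$.

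For $d>1$, a single offending coordinate makes the product integral diverge, because every other factor is strictly positive (possibly infinite). This gives the last claim of Proposition~\ref{prop:bound}.
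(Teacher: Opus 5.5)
Your proof is correct and takes the paper's route: H\"older's inequality gives $I_q \le 1$ for $0<q<1$, with $d$-independence because it is applied to the joint density (equivalently, \eqref{eq:tsallis-joint} transforms the summed R\'enyi divergence). Unboundedness comes from the unit-variance family with $\Ren_q = qm^2/2$, and the Gaussian claim from the sign of $s$. You also check directly that the escort integral diverges when $s \le 0$, including the boundary case $s=0$ and the product over coordinates, which the paper only asserts by pointing to the domain of the closed form.
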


\begin{proof}
For $q \in (0,1)$ H\"older's inequality gives $I_q \le 1$, so
$I_q-1 \in [-1,0]$ and dividing by $q-1<0$ yields the bound; it does not involve
$d$ because \eqref{eq:tsallis-joint} transforms the \emph{summed} R\'enyi
divergence rather than summing per-dimension transforms. For unit-variance
Gaussians separated by a mean $m$, $\Ren_q=q m^2/2$, which is unbounded as
$|m|\to\infty$ for every $q>0$. Equation~\eqref{eq:family} then makes $\Tsa_q$
unbounded for $q>1$, while $q=1$ is the unbounded KL. Finally, the Gaussian
convergence condition follows from $s=(1-q)\sigma^2+q>0$ in
\eqref{eq:renyi-gauss}.
\end{proof}

The consequence is about optimization rather than about ranges. A penalty that
saturates has a vanishing gradient once the posterior is far enough from the
prior, so it cannot drag the posterior back however heavily it is weighted. The
KL can, and does. Crucially, $\Ren_q$ for $q<1$ is a \emph{softened} KL that
remains unbounded, which gives us the control we need: if the bounded and
unbounded members of the same family at the same order behave differently, the
boundedness caused it.

\subsection{Exact reduction to the VGAE}

At $q=1$ both expressions in \eqref{eq:family} are singular while their limit is
finite, so we dispatch to the closed-form KL when $|q-1|<10^{-8}$: the
GGVA at $q=1$ evaluates the identical tensor the VGAE evaluates. Training
\emph{trajectories} cannot be compared bit-for-bit, since sparse message passing
reduces nondeterministically and two VGAE runs already disagree in the eighth
significant figure, so we assert the meaningful claim instead --- swapping the
VGAE for GGVA($q{=}1$) must perturb a run no more than re-running the VGAE does.
Measured over 20 epochs, the cross-model deviation and same-model floor agree to
two significant figures, both $9.5\times10^{-7}$.

\section{Experimental protocol}
\label{sec:protocol}

A divergence swap changes the \emph{shape} of the penalty and its
\emph{magnitude} at once, and only the first is interesting. Three controls
separate them.

\textbf{One knob.} Every arm is a GGVA built by the same call, with the same
encoder (a two-layer variational GCN, hidden $32$, latent $d=16$), decoder,
split, seed, epoch budget and optimizer. The VGAE baseline is the $q=1$ arm, so
even the model class is held constant.

\textbf{Magnitude is swept, not assumed away.} Each arm is run over
$c \in \{1,10,10^2,10^3,10^4\}$, five decades of penalty weight. A family that
were merely a rescaled KL would trace the KL's curve shifted sideways; a family
that differs in kind traces a differently shaped curve.

\textbf{Bounded against unbounded at fixed $q$.} We run Tsallis and R\'enyi at
$q=0.5$, which share everything but the bound.

\textbf{Metrics and data.} Held-out AUC on an $85/5/10$ edge split, read at the
epoch maximizing validation AUC. Collapse is reported in \emph{KL nats for every
arm} whatever it trained against, since the arms differ in their own units; we
also report active latent units and mean posterior standard deviation. Ten
graphs, five seeds, split redrawn per seed, CPU only: three synthetic (a
$300$-node stochastic block model \citep{holland1983stochastic},
Barab\'asi--Albert \citep{barabasi1999emergence}, and an Erd\H{o}s--R\'enyi
\emph{negative control} on which AUC must sit at chance), Zachary's karate club,
the citation networks Cora, CiteSeer and PubMed \citep{sen2008collective}, and
three non-citation real graphs (the \emph{C.\ elegans} connectome, the US power
grid \citep{watts1998collective}, Euroroad) whose lack of informative features
and different degree distributions stop the result from being a claim about
citation graphs. Full details in Appendix~\ref{app:setup}.
Aggregates over the \emph{six larger real graphs} mean Cora, CiteSeer, PubMed,
\emph{C.\ elegans}, the power grid and Euroroad. Karate remains in every
per-graph result but is excluded from aggregates because its 10\% test split has
only about eight edges, making its AUC exceptionally unstable.

\section{Results}

\begin{figure}[t]
  \centering
  \includegraphics[width=0.93\textwidth]{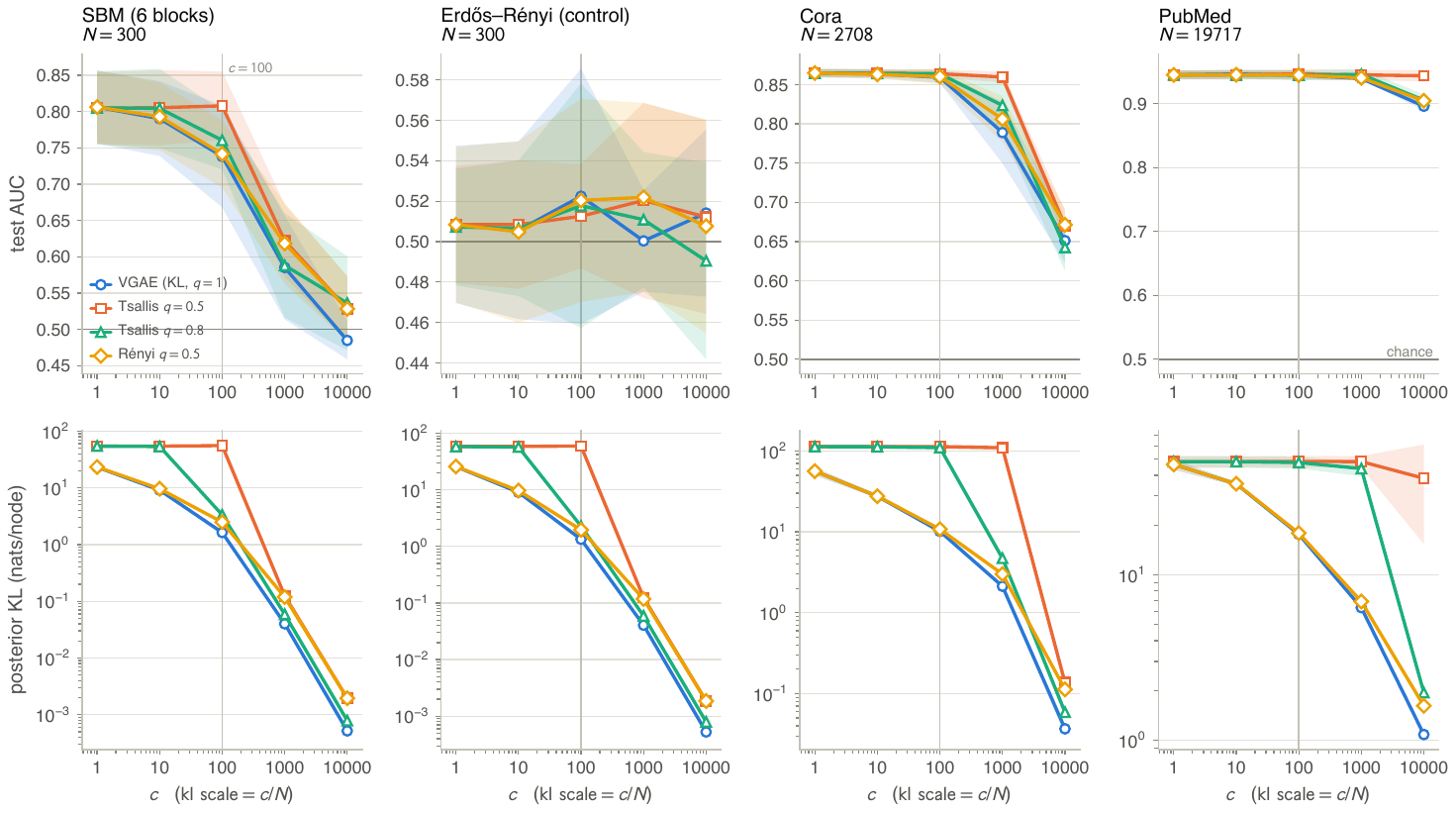}
  \caption{\textbf{The shape of the response differs, not just its scale.}
  Held-out AUC (top) and retained posterior information in KL nats (bottom)
  against the penalty weight $c$, for four objectives. The VGAE, Tsallis
  $q{=}0.8$ and R\'enyi $q{=}0.5$ decay steadily; Tsallis $q{=}0.5$ holds a rich
  posterior across two decades and then falls off a cliff. A pure magnitude
  effect would shift a curve sideways, not change its shape. Bands are 95\%
  $t$-intervals over five seeds. Four of the ten graphs are shown; the rest are
  in Appendix~\ref{app:full}.}
  \label{fig:sweep}
\end{figure}

\subsection{\texorpdfstring{Retained posterior information is ordered by $q$}{Retained posterior information is ordered by q}}

Table~\ref{tab:collapse} reports the posterior state at $c=100$. Retained
information is ordered by $q$ on every one of the ten graphs, with $q{=}1.5$ ---
an \emph{unbounded}, steeper member --- consistently below the VGAE. On the SBM
the VGAE has lost $40\%$ of its latent units at this weight while $q{=}0.5$ has
lost none. Mean posterior standard deviation gives the same ladder in the
opposite direction: $0.026$, $0.634$, $0.715$, $0.729$ and $0.815$ for Tsallis
$q\in\{0.5,0.8,0.95\}$, the VGAE and Tsallis $q{=}1.5$, respectively.

The sweep (Figure~\ref{fig:sweep}) separates this from a scaling artifact. If
$q$ merely rescaled the penalty, the $q{=}0.5$ curve would be the VGAE's
translated along $c$. Instead it is flat where the VGAE is already decaying, and
then drops abruptly --- a different functional shape.

\subsection{Boundedness, not the order, is the mechanism}

The R\'enyi row of Table~\ref{tab:collapse} is the control. At $q=0.5$ it shares
the order, the escort integral and every Gaussian term with the Tsallis arm, and
differs only in being unbounded. Across the six larger real graphs it retains
$1.02$--$1.30\times$ the VGAE's posterior information, while Tsallis at the same
order retains $2.8$--$26.6\times$. The order alone does almost nothing; the bound
does the work.

The Erd\H{o}s--R\'enyi negative control separates retained information from
useful signal. At $c=100$, Tsallis $q{=}0.5$ retains $59.44$ KL nats per node,
$44.4\times$ the VGAE's $1.34$, yet its AUC is $0.513$ and its random-label
probe remains at chance. A preserved posterior is therefore not, by itself,
evidence of learned structure.

\subsection{Accuracy does not separate the arms}

At $c=100$, selecting the best of four prespecified Tsallis orders gives
$+0.070$ AUC over the VGAE on the SBM and $+0.224$ on karate, but at most
$+0.014$ on the six larger real graphs (Euroroad $+0.014$, CiteSeer $+0.007$,
Cora and \emph{C.\ elegans} $+0.005$, the power grid $+0.003$, and PubMed
$+0.002$). Such maxima are upward-biased: the corresponding maximum on the
Erd\H{o}s--R\'enyi negative control is itself $+0.005$. The highest-mean
prespecified Tsallis arm scores $0.7789$ across the six larger real graphs
against the VGAE's $0.7764$, and a Friedman test over all ten tasks and six arms
does not reject ($\chi^2 = 7.26$, $p = 0.20$; mean-rank spread $1.70$ inside the
Nemenyi critical difference of $2.384$). \textbf{On link-prediction accuracy,
choosing $q$ buys nothing we can measure.}

\begin{figure}[t]
  \centering
  \includegraphics[width=0.82\textwidth]{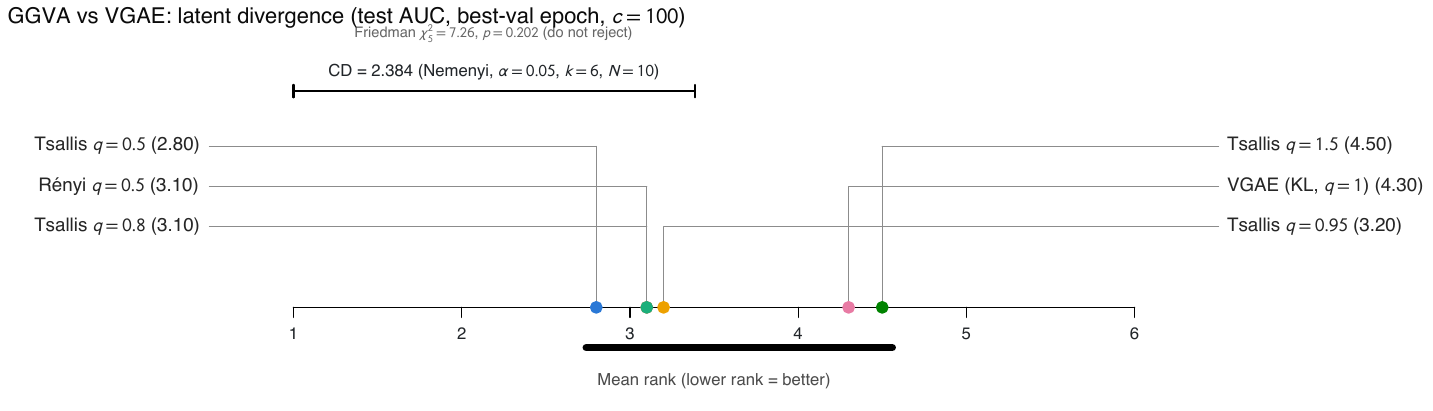}
  \caption{\textbf{No arm separates in aggregate link-prediction accuracy.}
  Critical-difference diagram for six arms over ten graphs. All mean ranks lie
  in one Nemenyi clique ($\mathrm{CD}=2.384$, $\alpha=0.05$).}
  \label{fig:cd}
\end{figure}

\begin{table}[t]
  \caption{\textbf{Retained posterior information is ordered by $q$; the R\'enyi
  control is not.} Posterior KL in nats per node at $c=100$, mean over five
  seeds, measured with the KL for \emph{every} arm regardless of its training
  objective. ``Bound'' is Proposition~\ref{prop:bound}. The last column is
  held-out AUC averaged over the six larger real graphs. Tsallis $q{=}0.5$ retains
  $2.8$--$49\times$ the VGAE's information across all ten graphs, while R\'enyi
  at the \emph{same order} stays within $1.02$--$1.30\times$ of it on the six larger real
  graphs; accuracy is flat throughout. All ten graphs in Appendix~\ref{app:full}.}
  \label{tab:collapse}
  \centering
  \small
  \begin{tabular}{lrrrrr}
    \toprule
    & & \multicolumn{3}{c}{Posterior KL (nats/node)} & Mean AUC \\
    \cmidrule(lr){3-5}
    Objective & Bound & SBM & Cora & PubMed & (6 larger real) \\
    \midrule
    VGAE (KL, $q{=}1$)      & $\infty$ &  1.63 &  10.13 & 17.59 & 0.7764 \\
    Tsallis $q{=}0.5$       & 2.00     & \textbf{55.50} & \textbf{112.99} & \textbf{48.46} & 0.7789 \\
    Tsallis $q{=}0.8$       & 5.00     &  3.34 & 109.83 & 47.74 & 0.7788 \\
    Tsallis $q{=}0.95$      & 20.0     &  1.92 &  15.16 & 27.80 & 0.7746 \\
    Tsallis $q{=}1.5$       & $\infty$ &  0.60 &   2.65 &  4.51 & 0.7552 \\
    \emph{R\'enyi $q{=}0.5$} & \emph{$\infty$} & \emph{2.48} & \emph{10.79} & \emph{17.88} & \emph{0.7765} \\
    \bottomrule
  \end{tabular}
\end{table}

\subsection{The retained information is usable --- on a task the objective never saw}

Held-out AUC asks only whether $z_i^\top z_j$ ranks edges above non-edges, which
is what the decoder was trained to do; it is close to a training metric. We
therefore freeze the posterior mean and probe it for \emph{node class}, a label
that appears nowhere in the objective, the split, or the encoder input, using a
linear classifier fitted on half the nodes and scored on the other half
(Appendix~\ref{app:setup}).

Here the arms separate clearly, and in the direction the collapse result
predicts. On CiteSeer at $c=100$ the VGAE's probe reaches macro-F1 $0.327$ while
Tsallis $q{=}0.5$ reaches $0.427$; at $c=10^3$ the gap widens to $0.278$ against
$0.421$. On Cora at $c=100$ the figures are $0.452$ against $0.529$. The R\'enyi
control again tracks the VGAE rather than the Tsallis arm of the same order
($0.322$ on CiteSeer, $0.453$ on Cora), so the same bound explains both results.
Clustering NMI moves with it (CiteSeer $0.037 \to 0.067$). The negative control
stays negative: on Erd\H{o}s--R\'enyi with random labels every arm sits at chance
through $c\le10^3$ ($\text{F1} \approx 0.16$ for six classes,
$\text{NMI}\le0.03$). At $c=10^4$ every arm has degenerated and macro-F1 falls
below chance.

Two qualifications. On the SBM the ordering \emph{reverses} at moderate weight
($0.953$ for the VGAE against $0.919$ at $c=10$), where the KL acts as a useful
bottleneck; GGVA regains the advantage only past the VGAE's collapse. On PubMed
all arms sit at $\approx0.80$ throughout. The effect appears where the VGAE's
embedding degrades and is absent where it does not.

\section{Discussion}

\textbf{Choosing $q$.} Below $q \approx 0.7$ the penalty sits at its ceiling for
every order (Figure~\ref{fig:qresponse}), making those settings mutually
indistinguishable and close to an unregularized autoencoder. The usable range is
$q \in [0.7, 0.95]$. We use $q=0.5$ for the controlled contrast because it makes
the boundedness mechanism maximally visible; it is a demonstration setting, not
our default recommendation for deployment.

\textbf{Bounded does not mean collapse-proof.} The bound saturates only once the
posterior is already far from the prior. Training \emph{starts} at the prior,
where the penalty is unsaturated and behaves like the KL, so a large enough
weight pins the posterior there from the first epoch and it never reaches the
saturated region. On nine of the ten graphs every arm has crossed below one nat
per node by $c=10^4$; on PubMed none has, even there. At the decade-spaced
resolution of our sweep, $q{=}0.5$ shifts any observed crossing by at most one
grid step. What $q$ buys is delayed collapse, not immunity.

\textbf{Interpreting $c$ across graphs.} Because $\lambda = c/N$, a fixed $c$ is a
different absolute weight on different graphs ($0.33$ on the SBM, $0.005$ on
PubMed). The per-graph ordering in $q$ holds throughout, but
Table~\ref{tab:collapse} is not a comparison at equal regularization pressure;
the swept figures are.

\textbf{Scope of the scalar intervention.} The order has also been made
learnable inside graph attention, where a per-edge $q$ sparsifies neighborhood
weights \citep{dacosta2026sparse}. There $q$ shapes the forward pass; here it
shapes only the latent penalty and leaves message passing untouched.

\textbf{Limitations.} The accuracy result is a null --- no arm separates under a
Friedman test at ten tasks --- which is weaker than showing no advantage exists.
Nulls of this shape are common at this scale: canonical node-level encoders also
fall inside a single Nemenyi clique when benchmarked with the same machinery
\citep{dacosta2026graphnetz}.
All runs use one encoder, decoder and latent width, leaving the $q$--$d$
interaction Proposition~\ref{prop:bound} predicts untested. And the probe is a
linear read-out, not an application: it shows the retained information is
recoverable, not that a practitioner would recover it. The weight sweep has one
point per decade, so it brackets rather than precisely locates collapse
thresholds; a denser grid is needed for sub-decade claims. We report AUC, while
the original VGAE also reports average precision \citep{kipf2016variational};
adding AP is an important extension of the link-prediction null.

\section{Conclusion}

A bounded Tsallis divergence in place of the KL gives one scalar $q$ that
controls posterior collapse by large, consistently ordered factors, with the
unbounded R\'enyi member at the same order isolating boundedness as the cause. A
linear probe recovers node class from the GGVA embedding where the VGAE's has
degraded, but no gain reaches link-prediction accuracy on any of the six larger real graphs: the
task the mechanism was expected to help is not the task it helps.

\bibliographystyle{plainnat}
\bibliography{references}

\newpage
\appendix

\section{Numerical implementation}
\label{app:numerics}

\paragraph{Conditioning near $q=1$.} Written literally, $s = (1-q)\sigma^2 + q$
subtracts two nearly equal quantities whenever $q$ is near $1$. In
\texttt{float32} the cancellation corrupts the last bits of $s$, and $\log s$ is
then wrong by a relative amount that the $1/(2(q-1))$ factor divides by something
small. At the prior itself, where the divergence must be \emph{exactly} zero,
$q=1.05$ evaluated to $2\times10^{-6}$. We instead use
$s = 1 + u$ with $u = (1-q)\,\mathrm{expm1}(2\log\sigma)$ and evaluate
$\log s$ as $\mathrm{log1p}(u)$, which is exact at $\log\sigma = 0$. This matters
because the near-prior regime is precisely where a collapse diagnostic must be
trustworthy.

\paragraph{Validation.} The closed form \eqref{eq:renyi-gauss} and the Tsallis
transform \eqref{eq:tsallis-joint} are checked against numerical quadrature of
the escort integral over $q \in [0.2, 2.0]$; worst absolute
error $5\times10^{-15}$. Quadrature shares none of the Gaussian algebra, so
agreement rules out a sign or factor error rather than re-deriving it.

\paragraph{Divergent and saturating regimes.} For $q>1$ the escort integral
converges only while $\sigma^2 < q/(q-1)$; outside that range $\Ren_q$ is
genuinely infinite and we return $+\infty$ rather than clamping, so a run fails
visibly. The exponent in \eqref{eq:tsallis-joint} is capped at $80$, below
\texttt{float32} overflow, so a diverging $q>1$ run yields a large finite loss
rather than NaNs that hide when the objective broke.

\section{Experimental setup}
\label{app:setup}

Encoder: two-layer variational GCN \citep{kipf2017semi}, hidden width $32$,
latent width $d=16$, with the posterior heads implemented as parallel graph
convolutions. Decoder: parameter-free inner product. Optimizer: Adam, learning
rate $10^{-2}$, $200$ epochs ($300$ on karate). Edge split: $85/5/10$ train /
validation / test via PyTorch Geometric's \texttt{RandomLinkSplit}
\citep{fey2019fast}, \textbf{redrawn per seed}, so the
reported spread covers both initialization and which edges were held out.
Message passing uses training edges only, so no held-out edge enters the
embeddings. Featureless graphs receive identity features (the convention of
\citealt{kipf2016variational}) when small, log-degree features otherwise. Five
seeds ($0$--$4$). All runs on CPU: MPS and CUDA change floating-point reduction
order, so a figure regenerated elsewhere would not be bit-identical.

\paragraph{Metrics.} Test AUC is read at the epoch maximizing validation AUC.
Active units are the fraction of latent dimensions whose $\mu$ varies across
nodes by more than $0.01$. Effective rank is the participation ratio
$(\sum_k \gamma_k)^2/\sum_k \gamma_k^2$ of the latent covariance spectrum, where
$\gamma_k$ are the covariance eigenvalues. It is a
continuous count of used directions that, unlike a thresholded unit count,
distinguishes ``sixteen dimensions carrying a little'' from ``one carrying
everything''.

\section{Complete results}
\label{app:full}

\begin{figure}[htbp]
  \centering
  \includegraphics[width=0.74\textwidth]{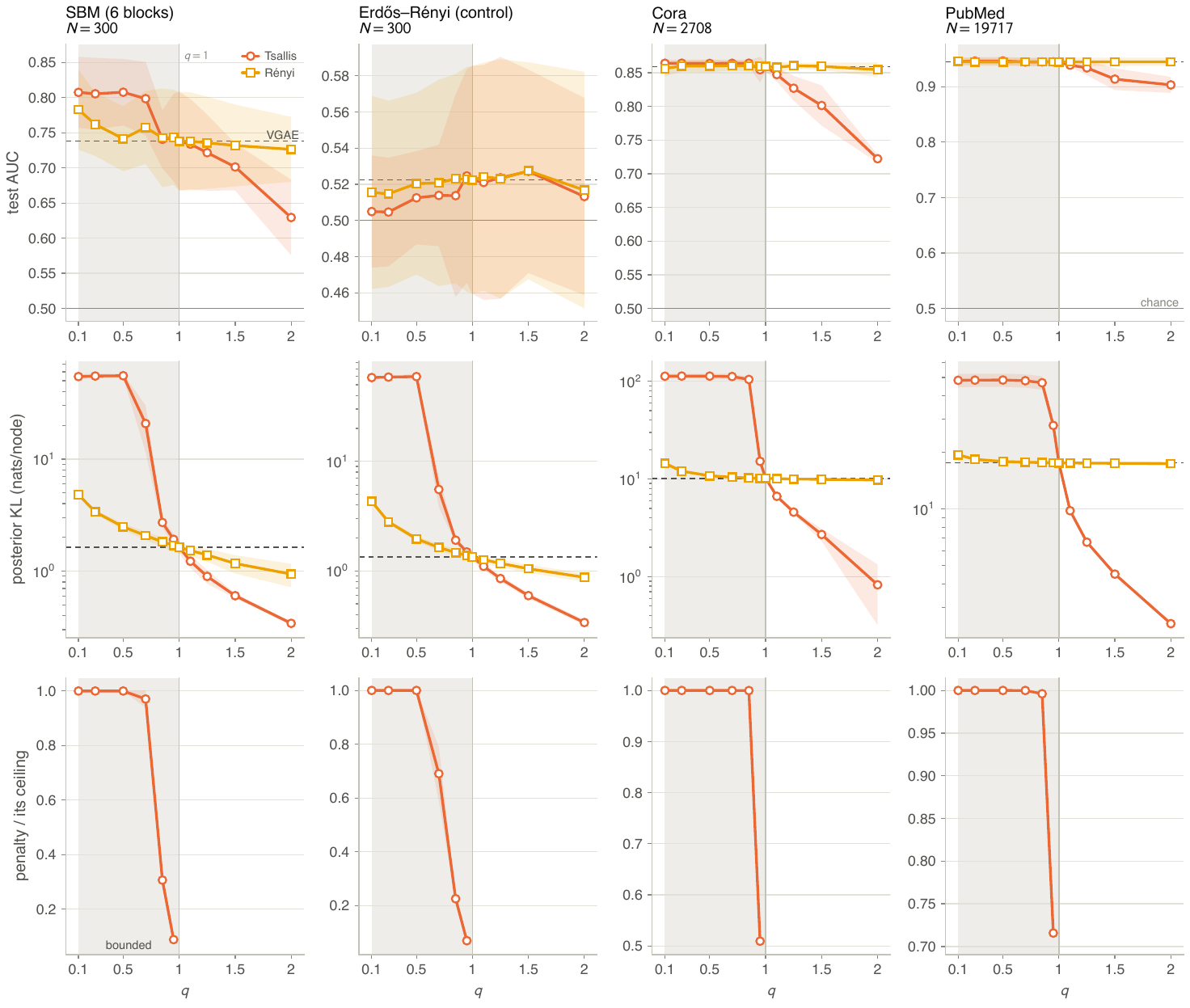}
  \caption{\textbf{The response in $q$, with the mechanism measured.} Held-out
  AUC (top), retained posterior information (middle), and the trained penalty as
  a fraction of its $1/(1-q)$ ceiling (bottom). The dashed line is the VGAE, which
  is the $q{=}1$ point of the same curves. The shaded region is where the Tsallis
  penalty is bounded; the saturation row exists only there, because an unbounded
  penalty has no ceiling to be a fraction of. Below $q \approx 0.7$ the penalty is
  pinned at its ceiling and the orders become indistinguishable.}
  \label{fig:qresponse}
\end{figure}

\begin{figure}[htbp]
  \centering
  \includegraphics[width=\textwidth]{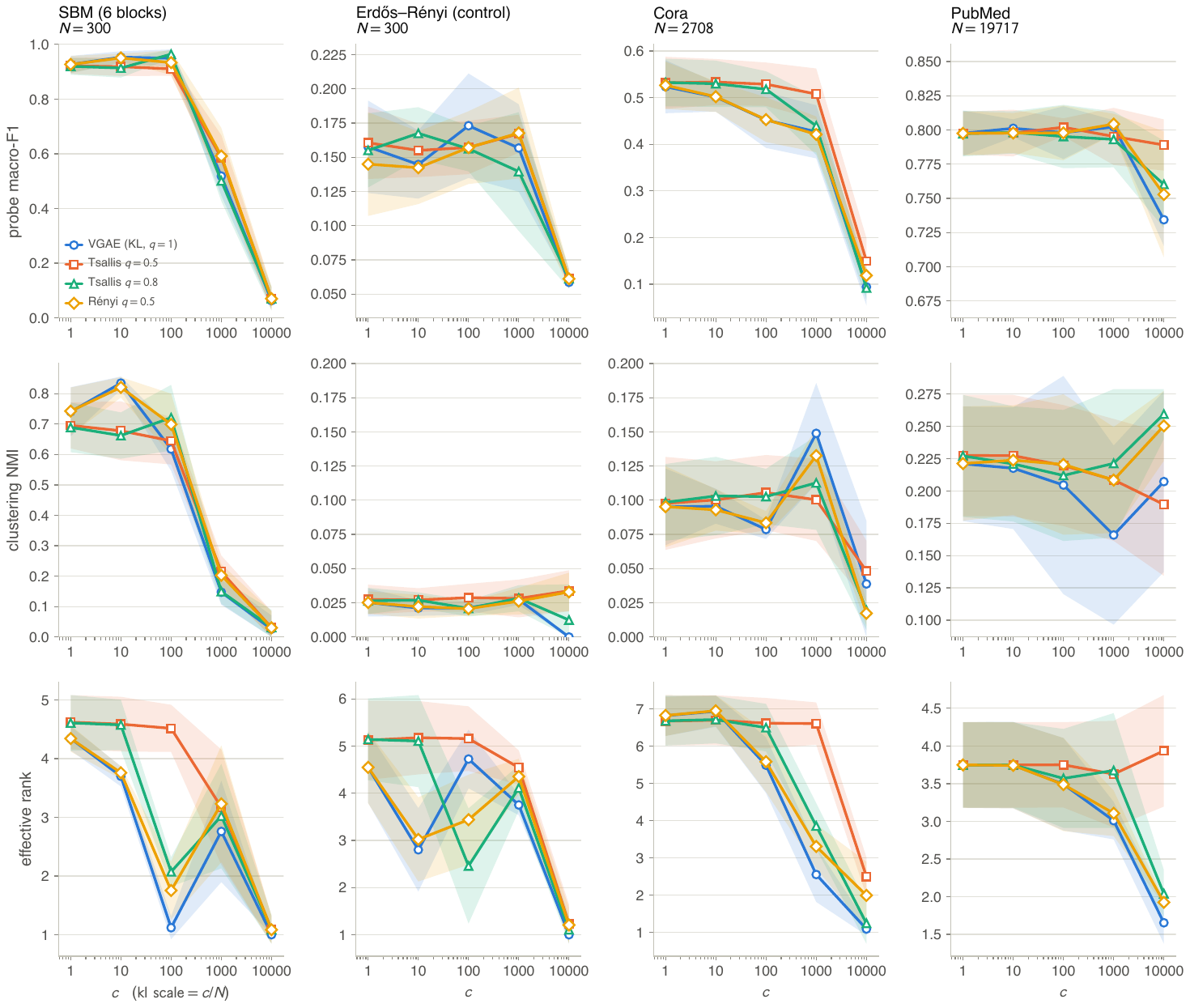}
  \caption{\textbf{Frozen-embedding probes.} Linear-probe macro-F1 (top),
  clustering NMI (middle) and effective rank (bottom) against the penalty weight
  $c$. Node class is never seen by any model. On the citation networks the VGAE's
  probe degrades as $c$ grows while Tsallis $q{=}0.5$ holds; the R\'enyi control
  at the same order tracks the VGAE. The Erd\H{o}s--R\'enyi panel carries random
  labels. Every arm sits at chance through $c\le10^3$; at $c=10^4$ all arms
  degenerate and macro-F1 falls below chance.}
  \label{fig:probe}
\end{figure}

\input{tables/exp01_generalized_divergence.tex}

\begin{figure}[p]
  \centering
  \includegraphics[width=0.98\textwidth]{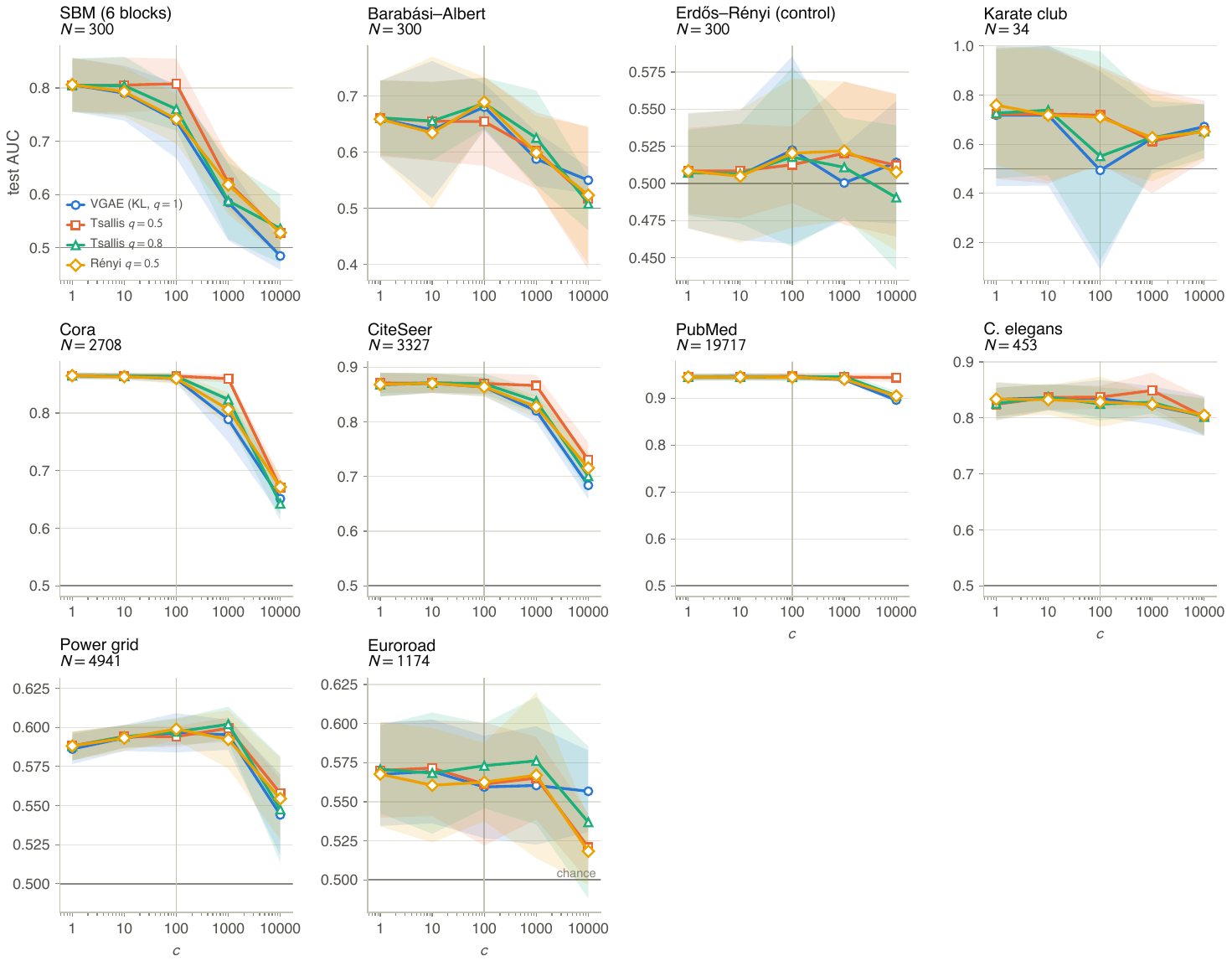}
  \caption{\textbf{Held-out AUC over the complete ten-graph weight sweep.}
  This is the full-suite counterpart of the top row of Figure~\ref{fig:sweep}.}
  \label{fig:sweep-auc-all}
\end{figure}

\begin{figure}[p]
  \centering
  \includegraphics[width=0.98\textwidth]{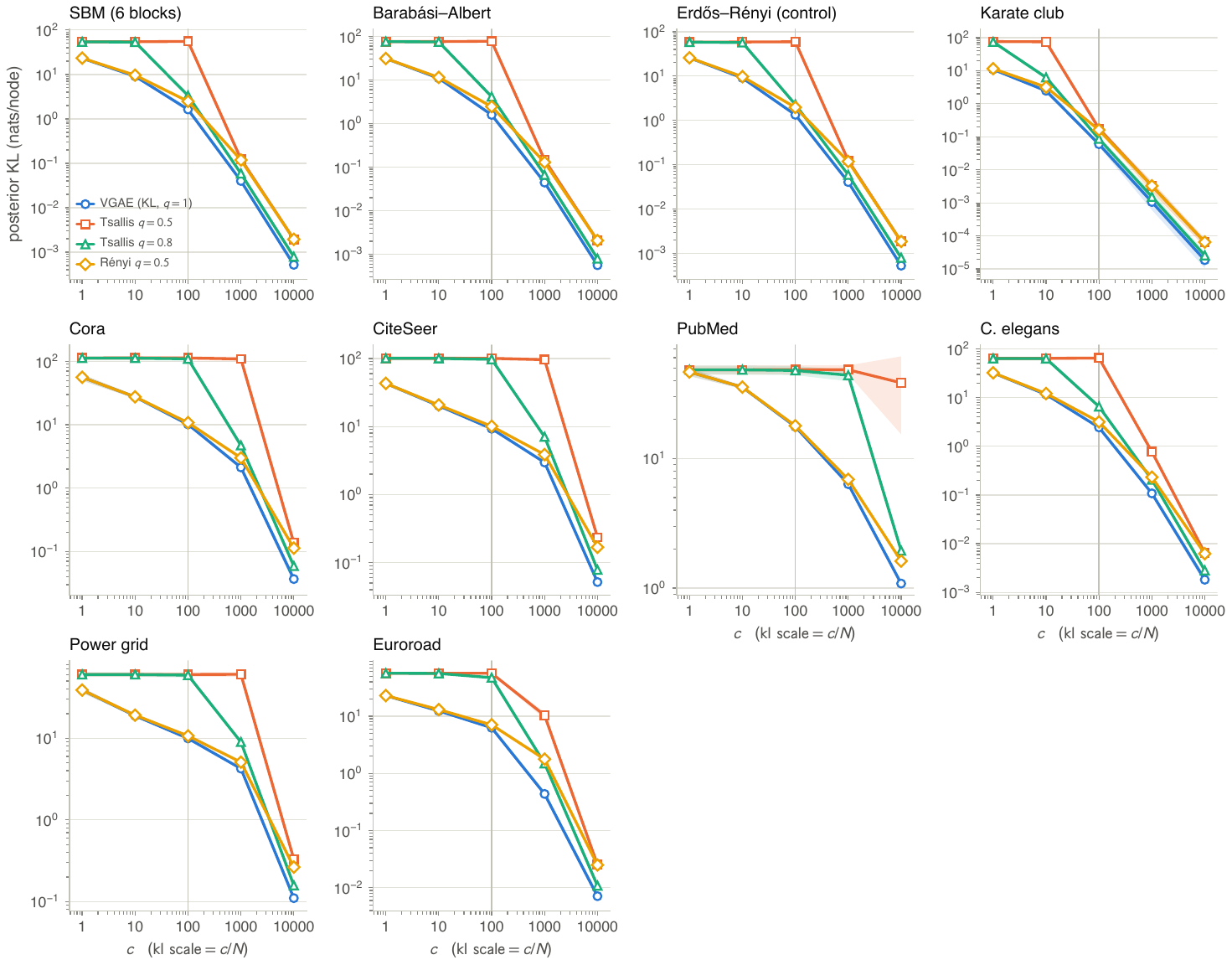}
  \caption{\textbf{Retained posterior KL over the complete ten-graph weight
  sweep.} This is the full-suite counterpart of the bottom row of
  Figure~\ref{fig:sweep}.}
  \label{fig:sweep-kl-all}
\end{figure}

\begin{figure}[p]
  \centering
  \includegraphics[width=0.98\textwidth]{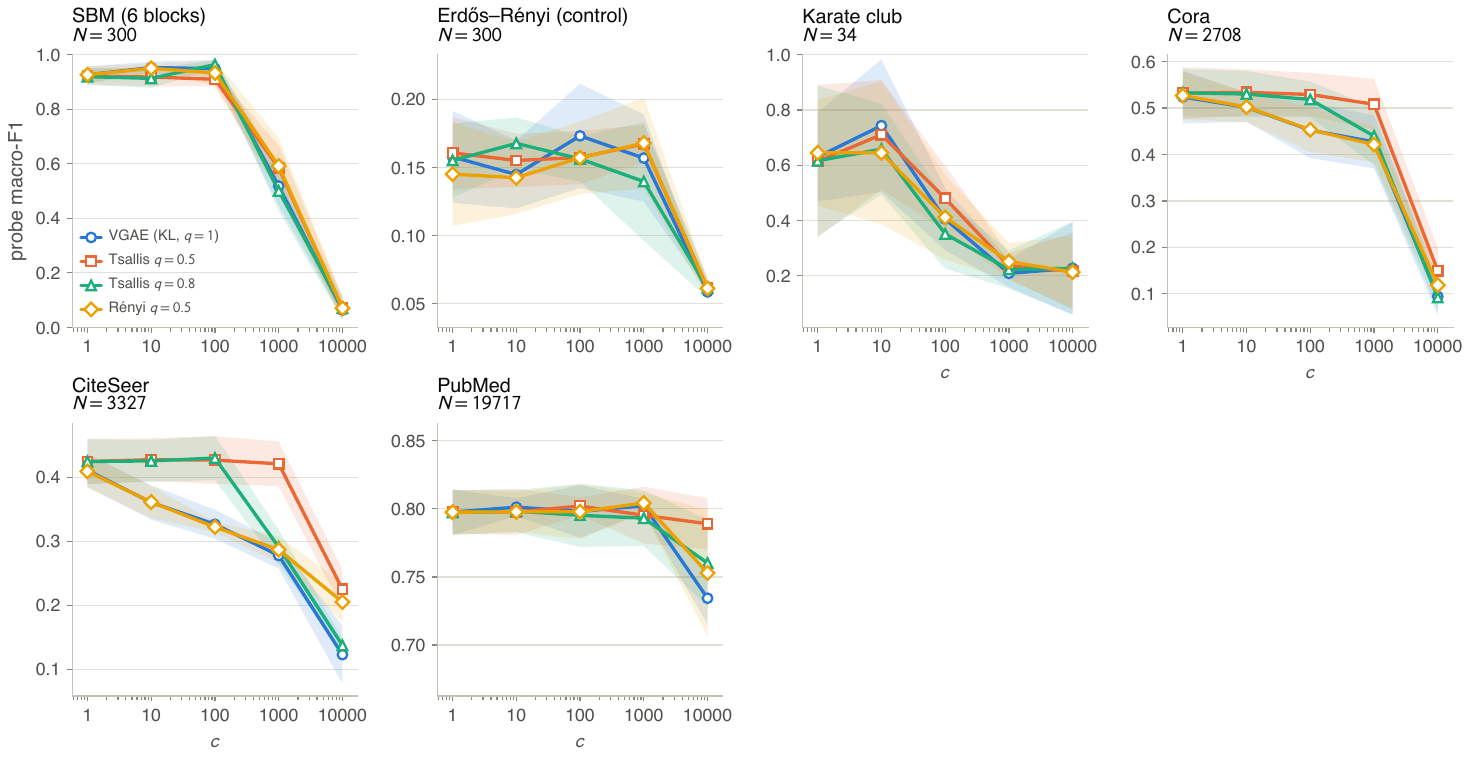}
  \includegraphics[width=0.98\textwidth]{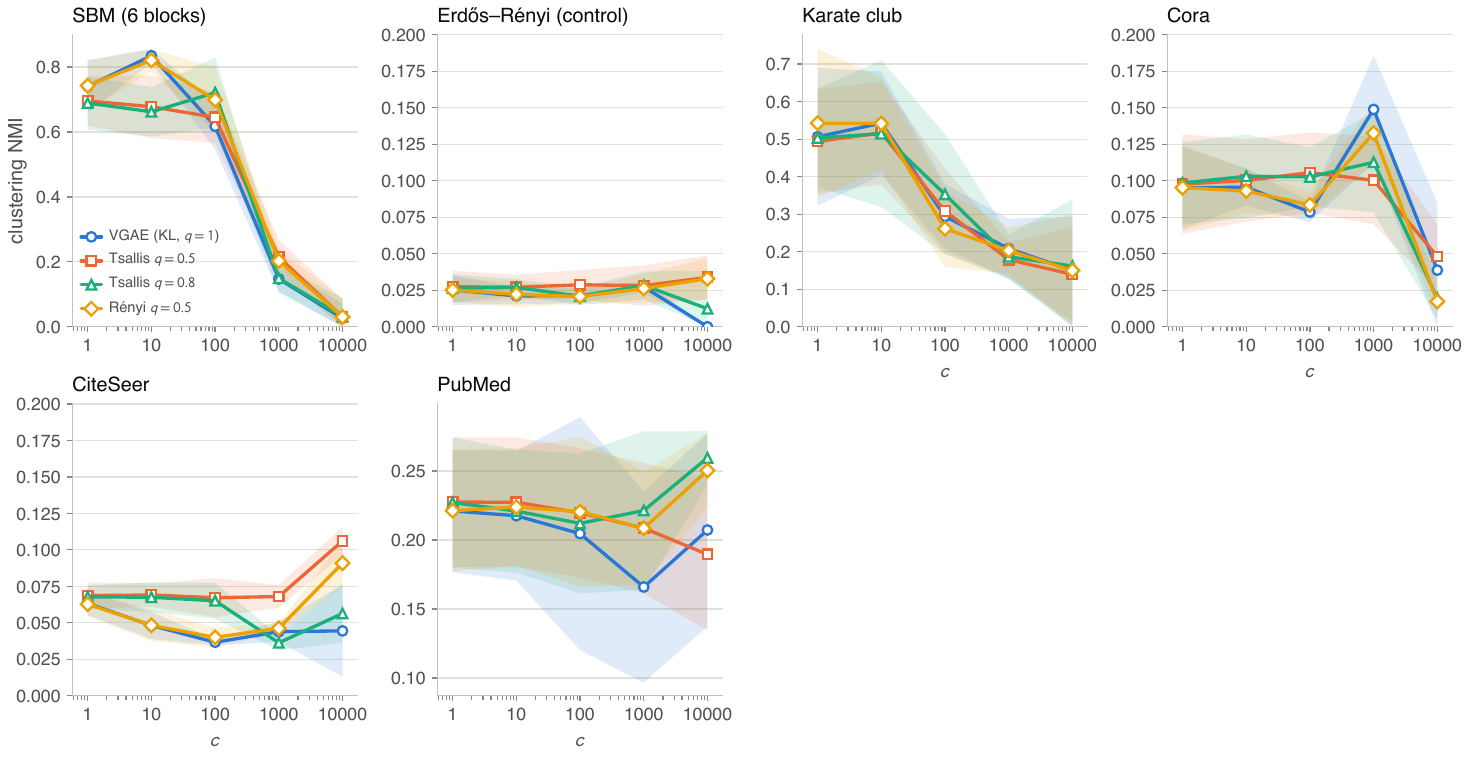}
  \caption{\textbf{Frozen-embedding label probes over all ten graphs.}
  Macro-F1 (top) and clustering NMI (bottom) extend Figure~\ref{fig:probe} to
  the full suite. Labels are unavailable on graphs where a probe is undefined.}
  \label{fig:probe-label-all}
\end{figure}

\begin{figure}[p]
  \centering
  \includegraphics[width=0.98\textwidth]{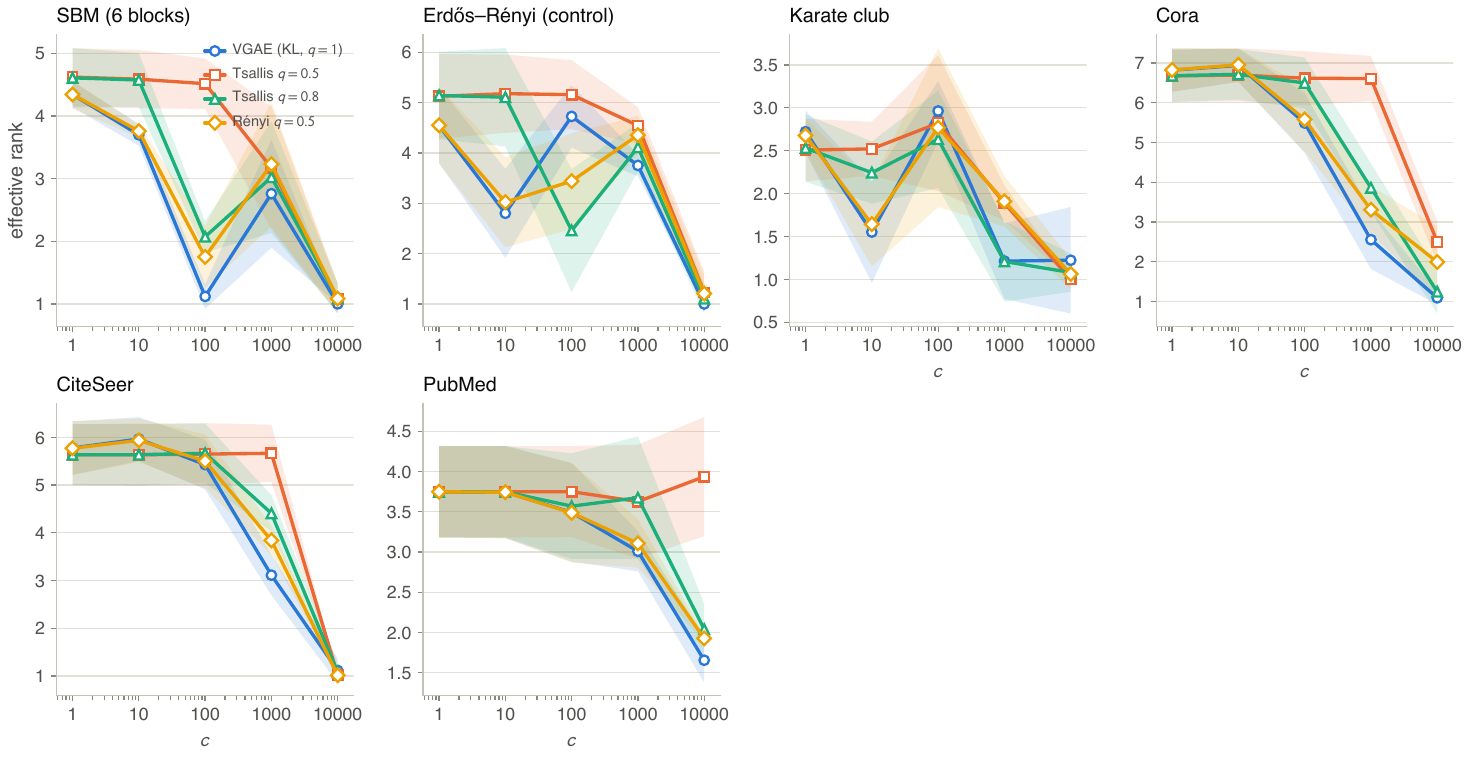}
  \caption{\textbf{Effective rank over the complete ten-graph sweep.}
  This continuous diagnostic complements the thresholded active-unit count.}
  \label{fig:probe-rank-all}
\end{figure}

Every per-seed number behind every figure is written to JSON alongside the
figures, including both epoch-selection protocols so that the choice of
\texttt{best\_val} over \texttt{final} is auditable rather than hidden.

\paragraph{Retained information relative to the VGAE at $c=100$.} Ratio of KL
nats, Tsallis $q{=}0.5$ / VGAE and R\'enyi $q{=}0.5$ / VGAE respectively: SBM
$34.0\times$ / $1.52\times$; Barab\'asi--Albert $49.0\times$ / $1.55\times$;
Erd\H{o}s--R\'enyi $44.4\times$ / $1.47\times$; karate $3.0\times$ / $2.70\times$;
Cora $11.2\times$ / $1.06\times$; CiteSeer $10.8\times$ / $1.09\times$; PubMed
$2.8\times$ / $1.02\times$; \emph{C.\ elegans} $26.6\times$ / $1.30\times$; power
grid $6.1\times$ / $1.07\times$; Euroroad $9.0\times$ / $1.13\times$. The
separation between the two columns is the isolation of the mechanism.

\paragraph{Aggregate ranking.} We follow the Friedman/Nemenyi protocol for
comparing methods over multiple datasets \citep{demsar2006statistical}, as
applied to graph neural network benchmarking by \citet{dacosta2026graphnetz}.
Friedman over ten tasks and six arms:
$\chi^2 = 7.26$, $p = 0.20$, not rejected. Mean ranks (lower is better): Tsallis
$q{=}0.5$ $2.80$; Tsallis $q{=}0.8$ $3.10$; R\'enyi $q{=}0.5$ $3.10$; Tsallis
$q{=}0.95$ $3.20$; VGAE $4.30$; Tsallis $q{=}1.5$ $4.50$. Nemenyi critical
difference at $\alpha=0.05$ is $2.384$ against an observed spread of $1.70$, so no
pair is separable. The ordering is suggestive and not significant, and eight
tasks would not have been enough either.

\section{Reproducibility}

The implementation are in the accompanying repository: \href{https://github.com/kleyt0n/ggva}{github.com/kleyt0n/ggva}.

\end{document}

%% file: tables/exp01_generalized_divergence.tex
\begin{table}[htbp]
  \centering
  \caption{\textbf{Held-out AUC for every graph and arm at $c=100$.}
  Members of the Rényi--Tsallis family act as the latent penalty at
  $\mathrm{kl\ scale}=100/N$. Entries are mean $\pm$ standard deviation over
  five seeds. The VGAE is the $q=1$ arm; only the divergence differs. BA is
  Barabási--Albert and ER is the Erdős--Rényi negative control.}
  \label{tab:exp01_generalized_divergence}
  \scriptsize
  \begin{tabular}{lccccc}
    \toprule
    Model & BA & C. elegans & CiteSeer & Cora & ER control \\
    \midrule
    Rényi $q=0.5$   & $\mathbf{0.689 \pm 0.045}$ & $0.829 \pm 0.045$ & $0.864 \pm 0.018$ & $0.860 \pm 0.009$ & $0.520 \pm 0.050$ \\
    Tsallis $q=0.5$ & $0.654 \pm 0.079$ & $0.837 \pm 0.020$ & $\mathbf{0.870 \pm 0.018}$ & $\mathbf{0.864 \pm 0.005}$ & $0.513 \pm 0.026$ \\
    Tsallis $q=0.8$ & $0.687 \pm 0.045$ & $0.825 \pm 0.030$ & $0.870 \pm 0.019$ & $0.864 \pm 0.006$ & $0.518 \pm 0.060$ \\
    Tsallis $q=0.95$ & $0.686 \pm 0.045$ & $0.818 \pm 0.050$ & $0.866 \pm 0.018$ & $0.855 \pm 0.019$ & $0.525 \pm 0.059$ \\
    Tsallis $q=1.5$ & $0.618 \pm 0.050$ & $\mathbf{0.839 \pm 0.033}$ & $0.824 \pm 0.017$ & $0.800 \pm 0.033$ & $\mathbf{0.527 \pm 0.056}$ \\
    VGAE (KL, $q=1$) & $0.680 \pm 0.042$ & $0.835 \pm 0.033$ & $0.864 \pm 0.017$ & $0.859 \pm 0.007$ & $0.522 \pm 0.063$ \\
    \bottomrule
  \end{tabular}

  \medskip
  \begin{tabular}{lccccc}
    \toprule
    Model & Euroroad & Karate & Power grid & PubMed & SBM \\
    \midrule
    Rényi $q=0.5$   & $0.563 \pm 0.025$ & $0.710 \pm 0.199$ & $0.599 \pm 0.007$ & $0.945 \pm 0.007$ & $0.742 \pm 0.046$ \\
    Tsallis $q=0.5$ & $0.561 \pm 0.039$ & $\mathbf{0.718 \pm 0.198}$ & $0.594 \pm 0.007$ & $\mathbf{0.946 \pm 0.007}$ & $\mathbf{0.808 \pm 0.047}$ \\
    Tsallis $q=0.8$ & $\mathbf{0.573 \pm 0.027}$ & $0.551 \pm 0.427$ & $0.597 \pm 0.008$ & $0.945 \pm 0.007$ & $0.760 \pm 0.040$ \\
    Tsallis $q=0.95$ & $0.564 \pm 0.021$ & $0.539 \pm 0.501$ & $\mathbf{0.600 \pm 0.008}$ & $0.945 \pm 0.008$ & $0.743 \pm 0.062$ \\
    Tsallis $q=1.5$ & $0.561 \pm 0.024$ & $0.563 \pm 0.343$ & $0.593 \pm 0.010$ & $0.914 \pm 0.020$ & $0.701 \pm 0.033$ \\
    VGAE (KL, $q=1$) & $0.559 \pm 0.033$ & $0.494 \pm 0.402$ & $0.597 \pm 0.013$ & $0.945 \pm 0.007$ & $0.738 \pm 0.070$ \\
    \bottomrule
  \end{tabular}
\end{table}

%% file: references.bib
@inproceedings{kingma2014auto,
  title     = {Auto-Encoding Variational {B}ayes},
  author    = {Kingma, Diederik P. and Welling, Max},
  booktitle = {International Conference on Learning Representations (ICLR)},
  year      = {2014},
  url       = {https://arxiv.org/abs/1312.6114}
}

@article{kipf2016variational,
  title   = {Variational Graph Auto-Encoders},
  author  = {Kipf, Thomas N. and Welling, Max},
  journal = {NeurIPS Workshop on Bayesian Deep Learning},
  year    = {2016},
  url     = {https://arxiv.org/abs/1611.07308}
}

@inproceedings{kipf2017semi,
  title     = {Semi-Supervised Classification with Graph Convolutional Networks},
  author    = {Kipf, Thomas N. and Welling, Max},
  booktitle = {International Conference on Learning Representations (ICLR)},
  year      = {2017},
  url       = {https://arxiv.org/abs/1609.02907}
}

@article{tsallis1988possible,
  title   = {Possible generalization of {B}oltzmann--{G}ibbs statistics},
  author  = {Tsallis, Constantino},
  journal = {Journal of Statistical Physics},
  volume  = {52},
  number  = {1--2},
  pages   = {479--487},
  year    = {1988},
  doi     = {10.1007/BF01016429},
  url     = {https://doi.org/10.1007/BF01016429}
}

@inproceedings{renyi1961measures,
  title     = {On Measures of Entropy and Information},
  author    = {R{\'e}nyi, Alfr{\'e}d},
  booktitle = {Proceedings of the Fourth Berkeley Symposium on Mathematical
               Statistics and Probability},
  volume    = {1},
  pages     = {547--561},
  year      = {1961},
  url       = {https://projecteuclid.org/euclid.bsmsp/1200512181}
}

@article{vanerven2014renyi,
  title   = {R{\'e}nyi Divergence and {K}ullback--{L}eibler Divergence},
  author  = {van Erven, Tim and Harremo{\"e}s, Peter},
  journal = {IEEE Transactions on Information Theory},
  volume  = {60},
  number  = {7},
  pages   = {3797--3820},
  year    = {2014},
  doi     = {10.1109/TIT.2014.2320500},
  url     = {https://arxiv.org/abs/1206.2459}
}

@inproceedings{li2016renyi,
  title     = {R{\'e}nyi Divergence Variational Inference},
  author    = {Li, Yingzhen and Turner, Richard E.},
  booktitle = {Advances in Neural Information Processing Systems (NeurIPS)},
  year      = {2016},
  url       = {https://arxiv.org/abs/1602.02311}
}

@article{kobayashi2020qvae,
  title   = {q-{VAE} for Disentangled Representation Learning and Latent
             Dynamical Systems},
  author  = {Kobayashi, Taisuke},
  journal = {IEEE Robotics and Automation Letters},
  volume  = {5},
  number  = {4},
  pages   = {5669--5676},
  year    = {2020},
  doi     = {10.1109/LRA.2020.3010206},
  url     = {https://arxiv.org/abs/2003.01852}
}

@inproceedings{khan2021epitomic,
  title     = {Epitomic Variational Graph Autoencoder},
  author    = {Khan, Rayyan Ahmad and Anwaar, Muhammad Umer and
               Kleinsteuber, Martin},
  booktitle = {25th International Conference on Pattern Recognition (ICPR)},
  pages     = {7203--7210},
  year      = {2021},
  doi       = {10.1109/ICPR48806.2021.9412531},
  url       = {https://arxiv.org/abs/2004.01468}
}

@inproceedings{zhao2019infovae,
  title     = {{InfoVAE}: Balancing Learning and Inference in Variational
               Autoencoders},
  author    = {Zhao, Shengjia and Song, Jiaming and Ermon, Stefano},
  booktitle = {AAAI Conference on Artificial Intelligence},
  volume    = {33},
  pages     = {5885--5892},
  year      = {2019},
  doi       = {10.1609/aaai.v33i01.33015885},
  url       = {https://ojs.aaai.org/index.php/AAAI/article/view/4538}
}

@inproceedings{tolstikhin2018wasserstein,
  title     = {Wasserstein Auto-Encoders},
  author    = {Tolstikhin, Ilya and Bousquet, Olivier and Gelly, Sylvain and
               Sch{\"o}lkopf, Bernhard},
  booktitle = {International Conference on Learning Representations (ICLR)},
  year      = {2018},
  url       = {https://openreview.net/forum?id=HkL7n1-0b}
}

@inproceedings{kim2024t3vae,
  title     = {{$t^3$}-Variational Autoencoder: Learning Heavy-Tailed Data with
               {S}tudent's $t$ and Power Divergence},
  author    = {Kim, Juno and Kwon, Jaehyuk and Cho, Mincheol and Lee, Hyunjong
               and Won, Joong-Ho},
  booktitle = {International Conference on Learning Representations (ICLR)},
  year      = {2024},
  url       = {https://openreview.net/forum?id=RzNlECeoOB}
}

@inproceedings{higgins2017beta,
  title     = {beta-{VAE}: Learning Basic Visual Concepts with a Constrained
               Variational Framework},
  author    = {Higgins, Irina and Matthey, Loic and Pal, Arka and Burgess,
               Christopher and Glorot, Xavier and Botvinick, Matthew and
               Mohamed, Shakir and Lerchner, Alexander},
  booktitle = {International Conference on Learning Representations (ICLR)},
  year      = {2017},
  url       = {https://openreview.net/forum?id=Sy2fzU9gl}
}

@inproceedings{bowman2016generating,
  title     = {Generating Sentences from a Continuous Space},
  author    = {Bowman, Samuel R. and Vilnis, Luke and Vinyals, Oriol and Dai,
               Andrew M. and J{\'o}zefowicz, Rafal and Bengio, Samy},
  booktitle = {Conference on Computational Natural Language Learning (CoNLL)},
  year      = {2016},
  doi       = {10.18653/v1/K16-1002},
  url       = {https://arxiv.org/abs/1511.06349}
}

@inproceedings{sonderby2016ladder,
  title     = {Ladder Variational Autoencoders},
  author    = {S{\o}nderby, Casper Kaae and Raiko, Tapani and Maal{\o}e, Lars
               and S{\o}nderby, S{\o}ren Kaae and Winther, Ole},
  booktitle = {Advances in Neural Information Processing Systems (NeurIPS)},
  year      = {2016},
  url       = {https://arxiv.org/abs/1602.02282}
}

@inproceedings{alemi2018fixing,
  title     = {Fixing a Broken {ELBO}},
  author    = {Alemi, Alexander A. and Poole, Ben and Fischer, Ian and
               Dillon, Joshua V. and Saurous, Rif A. and Murphy, Kevin},
  booktitle = {International Conference on Machine Learning (ICML)},
  year      = {2018},
  url       = {https://arxiv.org/abs/1711.00464}
}

@inproceedings{burda2016importance,
  title     = {Importance Weighted Autoencoders},
  author    = {Burda, Yuri and Grosse, Roger B. and Salakhutdinov, Ruslan},
  booktitle = {International Conference on Learning Representations (ICLR)},
  year      = {2016},
  url       = {https://arxiv.org/abs/1509.00519}
}

@article{demsar2006statistical,
  title   = {Statistical Comparisons of Classifiers over Multiple Data Sets},
  author  = {Dem{\v{s}}ar, Janez},
  journal = {Journal of Machine Learning Research},
  volume  = {7},
  pages   = {1--30},
  year    = {2006},
  url     = {https://www.jmlr.org/papers/v7/demsar06a.html}
}

@article{sen2008collective,
  title   = {Collective Classification in Network Data},
  author  = {Sen, Prithviraj and Namata, Galileo and Bilgic, Mustafa and
             Getoor, Lise and Galligher, Brian and Eliassi-Rad, Tina},
  journal = {AI Magazine},
  volume  = {29},
  number  = {3},
  pages   = {93--106},
  year    = {2008},
  doi     = {10.1609/aimag.v29i3.2157},
  url     = {https://doi.org/10.1609/aimag.v29i3.2157}
}

@article{fey2019fast,
  title   = {Fast Graph Representation Learning with {P}y{T}orch {G}eometric},
  author  = {Fey, Matthias and Lenssen, Jan Eric},
  journal = {ICLR Workshop on Representation Learning on Graphs and Manifolds},
  year    = {2019},
  url     = {https://arxiv.org/abs/1903.02428}
}

@article{watts1998collective,
  title   = {Collective dynamics of `small-world' networks},
  author  = {Watts, Duncan J. and Strogatz, Steven H.},
  journal = {Nature},
  volume  = {393},
  number  = {6684},
  pages   = {440--442},
  year    = {1998},
  doi     = {10.1038/30918},
  url     = {https://doi.org/10.1038/30918}
}

@article{holland1983stochastic,
  title   = {Stochastic blockmodels: First steps},
  author  = {Holland, Paul W. and Laskey, Kathryn Blackmond and Leinhardt, Samuel},
  journal = {Social Networks},
  volume  = {5},
  number  = {2},
  pages   = {109--137},
  year    = {1983},
  doi     = {10.1016/0378-8733(83)90021-7},
  url     = {https://doi.org/10.1016/0378-8733(83)90021-7}
}

@article{dacosta2026perspectives,
  title   = {Perspectives on {T}sallis Statistics for Artificial Intelligence},
  author  = {da Costa, Kleyton and Modenesi, Bernardo},
  journal = {arXiv preprint arXiv:2608.01223},
  year    = {2026},
  url     = {https://arxiv.org/abs/2608.01223}
}

@article{dacosta2026sparse,
  title   = {When Should Graph Attention Be Sparse? {L}earning a Per-Edge
             {T}sallis Index},
  author  = {da Costa, Kleyton and Modenesi, Bernardo},
  journal = {arXiv preprint arXiv:2608.02938},
  year    = {2026},
  url     = {https://arxiv.org/abs/2608.02938}
}

@article{dacosta2026graphnetz,
  title   = {{G}raph{N}etz: Statistical Benchmarking of Graph Neural Networks
             with Paired Tests and Rank Aggregation},
  author  = {da Costa, Kleyton and Modenesi, Bernardo},
  journal = {arXiv preprint arXiv:2605.09099},
  year    = {2026},
  url     = {https://arxiv.org/abs/2605.09099}
}

@article{barabasi1999emergence,
  title   = {Emergence of Scaling in Random Networks},
  author  = {Barab{\'a}si, Albert-L{\'a}szl{\'o} and Albert, R{\'e}ka},
  journal = {Science},
  volume  = {286},
  number  = {5439},
  pages   = {509--512},
  year    = {1999},
  doi     = {10.1126/science.286.5439.509},
  url     = {https://doi.org/10.1126/science.286.5439.509}
}
